\documentclass[]{nesy2026} 

\usepackage{longtable}
\usepackage{booktabs}
\usepackage[load-configurations=version-1]{siunitx}
\usepackage{mathtools}
\usepackage{bbm}
\usepackage{float}
\usepackage{multirow}
\usepackage{paralist}
\usepackage{todonotes}
\usepackage{enumitem}
\usepackage[font=small,skip=3pt]{caption}
\usepackage{multicol}
\usepackage{amsfonts}
\usepackage{amsmath}
\usepackage{stmaryrd}
\usepackage{xspace}
\usepackage{multirow}
\usepackage{comment}
\usepackage{booktabs}

\setlist{nosep,leftmargin=*}
\small
\renewcommand{\arraystretch}{0.9}
\theorembodyfont{\upshape}
\theoremheaderfont{\scshape}
\theorempostheader{:}
\theoremsep{\newline}

\newcommand{\ApM}{\mathrm{A_p^M}}
\newcommand{\ApME}{\mathrm{A_p^{ME}}}
\newcommand{\X}{\mathbf{X}}
\newcommand{\WX}{\mathbf{WX}}
\newcommand{\F}{\mathbf{F}}
\newcommand{\G}{\mathbf{G}}
\newcommand{\U}{\mathbf{U}}
\newcommand{\R}{\mathbf{R}}
\DeclareMathOperator{\agg}{Agg}
\usepackage{amsmath}
\DeclareMathOperator*{\argmax}{arg\,max}

\newif\ifsolutions
\solutionstrue
\solutionsfalse

\ifsolutions
  \includecomment{solution}
\else
  \excludecomment{solution}
\fi

\newcommand{\nbb}[1]{\hbox to 0pt{\textcolor{red}{\bf!}}%
\marginpar
{\parbox{20mm}{\scriptsize\textcolor{red}{#1}}}}

\newcommand{\p}{\varphi}

\newcommand{\mdl}{\models}
\newcommand{\Int}{\ensuremath{\mathcal{I}}\xspace}
\newcommand{\Tmf}{\ensuremath{\mathfrak{T}}\xspace}
\newcommand{\var}{\ensuremath{\mathcal{X}}\xspace}

\newcommand{\assign}{\ensuremath{\mathfrak{a}}\xspace}

\newcommand{\Until}{\ensuremath{\mathcal{U}}\xspace}

\newcommand{\Release}{\ensuremath{\mathbin{\mathcal{R}}}\xspace}

\title[First-Order Temporal Logic Tensor Networks]{First-Order Temporal Logic Tensor Networks}

\clearauthor{
  \Name{Luca Boscarato} \Email{lboscarato@unibz.it}\\
  \addr Free University of Bozen-Bolzano
  \AND
  \Name{Ivan Donadello} \Email{ivan.donadello@unibz.it}\\
  \addr Free University of Bozen-Bolzano
  \AND
  \Name{Alessandro Artale} \Email{Alessandro.Artale@unibz.it}\\
  \addr Free University of Bozen-Bolzano
  \AND
  \Name{Marco Montali} \Email{marco.montali@unibz.it}\\
  \addr Free University of Bozen-Bolzano
  \AND
  \Name{Fabrizio Maria Maggi} \Email{maggi@unibz.it}\\
  \addr Free University of Bozen-Bolzano}

\begin{document}
\maketitle
\thispagestyle{empty}   
\pagestyle{plain}       

\begin{abstract}
\begin{solution}
\textcolor{blue{
We present \emph{Linear Temporal Real Logic over Finite Traces} (LTRL\textsubscript{f}), an extension of Real Logic---the differentiable, many-valued semantics underlying Logic Tensor Networks---to the temporal domain.
LTRL\textsubscript{f} augments first-order Real Logic with the usual linear temporal operators (Next, Until, Eventually, Globally, Release), interpreted over finite discrete traces, following a fuzzy LTL framework.
Temporal operators are implemented through differentiable approximations (generalized means) of fuzzy supremum and infimum, ensuring full compatibility with gradient-based optimization techniques.
In particular, we propose a formal syntax and grounded semantics, prove differentiability of the resulting evaluation, and validate the framework with a reference implementation, including a parser and temporal evaluator built on top of the Logic Tensor Network framework.
Finally, we evaluate the framework against both neurosymbolic and purely neural approaches, as well as presenting some numerical examples to demonstrate that LTRL\textsubscript{f} correctly generalizes Boolean LTL\textsubscript{f} semantics and supports expressive spatio-temporal constraints.}
\end{solution}
Most of the existing neuro-symbolic AI methods focus on the scenario of static knowledge where objects do not change according to a temporal dimension. Temporal neuro-symbolic works are still under explored and are mainly developed for time-interval logic or propositional linear temporal logic. There is a lack of models studying linear temporal logics with predicates that deal with objects whose properties and relations change through the time. We present First-Order Temporal Logic Tensor Networks (FOT-LTN) that is an extension of Logic Tensor Networks (LTN) that fills this gap by considering a linear-temporal dimension. In particular, FOT-LTN joins the syntax of First-Order Linear Temporal Logic with the fuzzy (and real-valued) semantics of LTN obtaining a framework that supports both temporal operators and quantifiers and is totally differentiable. A first evaluation regards a temporal knowledge graph completion task on two synthetic datasets showing better performance of FOT-LTN with respect to dedicated (purely neural) methods.
\end{abstract}

\section{Introduction}\label{sec:intro}
Neuro-Symbolic AI (NeSy) is a prominent paradigm for developing AI systems that are both data-driven and verifiably robust. While pure deep learning architectures excel at pattern recognition, they fundamentally lack the capacity for compositional reasoning and the adherence to explicit domain constraints. Conversely, classical symbolic logic offers rigorous guarantees and explainability but struggles to handle noisy, real-world data.

\begin{solution}
\textcolor{blue}{
Among the frameworks addressing this dichotomy, Logic Tensor Networks (LTNs) have gained significant traction. LTNs bridge the gap by embedding First-Order Logic (FOL) formulas into continuous vector spaces using fuzzy logic semantics (Real Logic). In an LTN, symbols such as constants, functions, and predicates are grounded as tensors, deep neural networks, and differentiable operations, respectively. This allows domain knowledge to be injected directly into the neural network's loss function as differentiable logical constraints, enabling end-to-end gradient-based optimization.}
\end{solution}

Despite the success of several NeSy frameworks in static domains, many real-world applications—ranging from autonomous driving to process monitoring~\citep{Di_Francescomarino2026-uy}—are inherently dynamic. In these settings, data arrives as sequential streams with underlying rules expressed as temporal dependencies. Classical NeSy frameworks based on First-Order Logic, such as Logic Tensor Networks~\citep{SerafiniGBDSB21,serafini_garcez_2020}, are structurally unequipped to express temporal relations, such as ``event A must always happen eventually after event B''. While Linear Temporal Logic (LTL), or its finite-trace variant LTL\textsubscript{f}~\citep{DBLP:conf/ijcai/GiacomoV13}, provides an established syntax for reasoning over time even with a (differentiable) fuzzy semantics~\citep{DBLP:journals/corr/abs-1203-6278}, its standard formulation is propositional and cannot express temporal relations between objects, such as ``every car object must stop until a closing pedestrian object is crossing''. To close this gap, we introduce First-Order Temporal Logic Tensor Networks (FOT-LTN), a novel NeSy framework that extends LTN to linear temporal structures. By defining differentiable temporal operators over finite traces, FOT-LTN allows neural networks to learn from and reason about temporal sequences in presence of temporal constraints with predicates. The key contributions are:
\begin{itemize}
    \item FOT-LTN, a new NeSy framework that learns and reasons with a first-order linear temporal logic expressivity with a real-valued semantics.
    \item A differentiable (real-valued) semantics for the first-order temporal language underlying FOT-LTN, as we formalize the continuous evaluations of temporal operators and quantifiers in a differentiable manner.
    \item A first evaluation on synthetic datasets simulating traffic scenes where FOT-LTN improves the performance (and the logical consistency) in a temporal knowledge graph task over a dedicated, purely neural, method, especially in low-supervision regimes.
\end{itemize}

\begin{solution}
    \textcolor{blue}{
    Mathematical Guarantees: We provide a rigorous inductive proof (detailed in Appendix C) demonstrating that the truth values of LTRLf formulas are differentiable everywhere on the valid domain of neural network activations.}
    \end{solution}

\begin{solution}
\textcolor{blue}{
Neuro-symbolic AI aims at combining deep learning --- which excels in many tasks --- with logic, in order to obtain more interpretable results. In this context, Logic Tensor Networks~\citep{DBLP:conf/nesy/SerafiniG16,serafini_garcez_2020} represent a relevant framework, as they ground first-order logic formulas into differentiable computations over tensors, using \emph{Real Logic}---a many-valued semantics where truth values are constrained in the interval $[0,1]$ and connectives are implemented by fuzzy operators (t-norms, t-conorms, fuzzy negation and implication).
Once the knowledge base, composed of logical axioms, has been defined, the neural network aims at maximizing the aggregate satisfaction of the knowledge base by turning logical constraints into differentiable loss terms. In such a way, it is able to learn representations that satisfy the given logical constraints, enabling neuro-symbolic integration. 
The original LTN framework focuses on \emph{static} constraints and does not natively support reasoning about \emph{temporal sequences}.
However, many real-world applications require expressing temporal dependencies, which are naturally expressed in temporal logics such as LTL or its finite-trace variant LTL\textsubscript{f}~\citep{DBLP:conf/ijcai/GiacomoV13}.
Hence, in this paper, we introduce \emph{Linear Temporal Real Logic over Finite Traces} (LTRL\textsubscript{f}), which extends Real Logic with the usual LTL\textsubscript{f} temporal operators (Next $\X$, Weak Next $\WX$, Eventually $\F$, Globally $\G$, Until $\U$, Release $\R$), interpreted over finite discrete traces using fuzzy, differentiable semantics.
The temporal evaluation follows the fuzzy Linear Temporal Logic (LTL) proposed by \citet{DBLP:journals/corr/abs-1203-6278}, instantiated with differentiable generalized means as used in LTN quantifier aggregation.
Finally, we prove that the resulting grounded semantics is differentiable with respect to model parameters, enabling gradient-based learning of temporal constraints.}
\end{solution}

\section{Background}\label{sec:background}
FOT-LTN merges LTN with First-Order Linear Temporal Logic over finite traces.

\subsection{Real Logic and Logic Tensor Networks}\label{sub:sec:RL}

Logic Tensor Networks (LTN)~\citep{serafini_garcez_2020} is based on a fuzzy first-order language $\mathcal{L}=(\mathcal{C},\mathcal{F},\mathcal{P},\mathcal{X},\delta,\delta_{in}\delta_{out})$ (called Real Logic), typed over a non-empty set $\mathbb{D}$ of \textit{domain symbols}, where $\mathcal{C}$ is a set of constant symbols, $\mathcal{F}$ is a set of function symbols, $\mathcal{P}$ is a set of predicate symbols and $\mathcal{X}$ is a set of variable symbols. The \emph{arity} of a function or predicate is the number of arguments of the function or predicate, respectively.
In order to assign a type to each object, as well as to functions and predicates, we define three typing functions $\delta$, $\delta_{\mathrm{in}}$ and $\delta_{\mathrm{out}}$ as follows:
\begin{itemize}
    \item $\delta: \mathcal{X} \cup \mathcal{C} \to \mathbb{D}$, returns the domain type of a variable or constant;
    \item $\delta_{\mathrm{in}}: \mathcal{F} \cup \mathcal{P} \to \mathbb{D}^n$, for each $n$-ary function or predicate, where $\mathbb{D}^n$ is the $n$-ary cartesian product, returns the domains of the $n$ arguments of functions and predicates;
    \item $\delta_{\mathrm{out}} : \mathcal{F} \to \mathbb{D}$, 
    returns the range of a function symbol.
\end{itemize}
A term $\tau$ is constructed recursively from variables, constants and function symbols. In particular, every term $\tau \in \mathcal{X} \cup \mathcal{C}$ is a term of domain $\delta(\tau)$. If $f \in \mathcal{F}$ is an $n$-ary function symbol and the sequence of terms $(\tau_1,\ldots,\tau_n)$ has domain sequence $\delta_{\mathrm{in}}(f)$, then $f(\tau_1,\ldots,\tau_n)$ is a term with domain $\delta_{\mathrm{out}}(f)$.
Atomic formulas are either equalities $\tau_1=\tau_2$, where the two terms must have the same domain, or n-ary predicates $p(\tau_1,\ldots,\tau_n)$ with domains of $(\tau_1,\ldots,\tau_n)$ matching $\delta_{\mathrm{in}}(p)$.
More complex formulas can be constructed starting from atomic ones by means of fuzzy connectives $\neg,\wedge,\vee,\rightarrow,\leftrightarrow$ and quantifiers $\forall,\exists$.

The semantics of Real Logic differs from classical first-order semantics in that symbols are grounded in real-valued tensors rather than interpreted over abstract domains.
A grounding $\mathcal{G}$ assigns: to each domain symbol $D \in \mathbb{D}$ a set of tensors
\[
\mathcal{G}(D) \subseteq \bigcup_{k \in \mathbb{N}} \ \bigcup_{n_1,\ldots,n_k \in \mathbb{N}} \mathbb{R}^{n_1 \times \cdots \times n_k}.
\]
In particular, given a sequence of domains $D_1,\ldots,D_k$ (with $D_i \in \mathbb{D}$), $\mathcal{G}(D_1,\ldots,D_k) = \bigtimes_{i=1}^k \mathcal{G}(D_i)$. The grounding assigns to each constant symbol $c \in \mathcal{C}$ a tensor $\mathcal{G}(c) \in \mathcal{G}(\delta(c))$, and to each variable $x \in \mathcal{X}$ a finite sequence of tensors $\mathcal{G}(x) = (d_1,\ldots,d_k)$ with $d_i \in \mathcal{G}(\delta(x))$.
\begin{solution}
\nbb{A: Why is $\mathcal{G}(x)$ mapped to a sequence, and not to a set? And why only finitely many elements? Is this common in real logic? Or shall we note this explicitly?}    
\end{solution}
Such finite sequences provide the grounded instances over which quantified formulas are evaluated. Furthermore, each function symbol $f \in \mathcal{F}$ is assigned to a function
$
\mathcal{G}(f): \mathcal{G}(\delta_{\mathrm{in}}(f)) \to \mathcal{G}(\delta_{\mathrm{out}}(f)),
$
and each predicate symbol $p \in \mathcal{P}$ is assigned a function
$
\mathcal{G}(p): \mathcal{G}(\delta_{\mathrm{in}}(p)) \to [0,1].
$
Thus, terms are interpreted as tensors, while formulas are interpreted as truth degrees in the interval $[0,1]$. Logical connectives are defined by means of fuzzy-logic operators, such as t-norms, t-conorms, fuzzy implications, and fuzzy negations. On the other hand, first order quantifiers are expressed via aggregation operators.

\subsection{First Order Linear Temporal Logic over Finite Traces}\label{sec:foltlf}

The First-Order Linear Temporal Logic over finite traces (FO-LTL\textsubscript{f}), is obtained by
extending the usual first-order language with the temporal operator \emph{until}, $\U$, interpreted over linear structures, called \emph{traces}. Here we present FO-LTL\textsubscript{f} as the logic interpreted over \emph{finite linear structures}~\citep{DBLP:conf/ijcai/ArtaleMO19,DBLP:journals/tocl/ArtaleMO24} which, in turns, adapts to the first-order setting the Linear Temporal Logic on finite linear structures, LTL\textsubscript{f}~\citep{DBLP:conf/ijcai/GiacomoV13}. Formulas of FO-LTL\textsubscript{f} are of the form:
$$
\p
::= 
p(\tau_1,\ldots,\tau_n)
\mid
\tau_1 = \tau_2
\mid
\neg \p \mid
(\p \land \p)
\mid \exists x \p \mid
(\p \Until \p).
$$

A \emph{first-order temporal interpretation} (or \emph{trace}) is a
pair $\Int =(\Delta, (\Int_{n})_{n \in \Tmf})$, where $\Tmf$ is a
sub-order of $(\mathbb{N}, <)$ of the form $[0,l]$,
with $l \in \mathbb{N}$, and each $\Int_{n}$ is a classical first-order interpretation with
a non empty domain $\Delta$ such that
$p^{\Int_{n}} \subseteq \Delta^{\textsf{ar}(p)}$, for 
$p$ an atomic predicate of arity ${\textsf{ar}(p)}$, and
$c^{\Int_{i}}=c^{\Int_{j}}\in \Delta$ for all $c\in \mathcal{C}$ and
$i,j \in \mathbb{N}$, i.e., constants are \emph{rigid designators} (with
fixed interpretation, denoted simply by $c^{\Int}$).
The stipulation that all time points share the same domain $\Delta$ is
called the \emph{constant domain assumption} (i.e., objects are
not created or destroyed over time), and it is the most general choice
in the sense that increasing, decreasing, and varying domains can all
be reduced to it~\citep{Gabbay2005-GABMML-2}.
An \emph{assignment} is a function $\assign$ from $\var$ to $\Delta$, and
the \emph{value of a term $\tau$ under $\assign$} is defined
as: $\assign(\tau) = \assign(x)$, if $\tau = x \in \var$, and
$\assign(\tau) = c^{\Int}$, if $\tau = c \in \mathcal{C}$.   
Given a formula~$\p$, the \emph{satisfaction of
  $\p$ in~$\Int$ at time point $n \in \Tmf$ under an assignment
  $\assign$}, written $\Int, n \models^{\assign} \p$, is inductively
defined as:
\[
\begin{array}{lcl}
		\Int, n \models^{\assign} 
		p(\tau_1,\ldots,\tau_n)
		& \text{iff} & 
		(\assign(\tau_1),\ldots,\assign(\tau_n)) \in p^{\Int_n},\\
		\Int, n \models^{\assign} 
		\tau_1 = \tau_2
		& \text{iff} & 
		\assign(\tau_1) = \assign(\tau_n),\\

		\Int, n  \models^{\assign} \neg \psi & \text{iff} & \text{not } \Int, n  \models^{\assign} \psi, \\
		\Int, n  \models^{\assign} \psi \land \chi & \text{iff} & \Int, n  \models^{\assign} \psi \text{ and } \Int, n  \models^{\assign} \chi, \\
		\Int, n  \models^{\assign} \exists x \psi & \text{iff}
                             & \Int, n \models^{\assign'} \psi, \text{
                               for some assignment } \assign' \text{ that can differ from } \assign \text{ only on } x, \\
		\Int, n  \models^{\assign} \psi \Until \chi &
		                                                  		\text{iff} & \text{there is }  m \in \Tmf, m > n  \colon \Int, m \models^{\assign}\chi \text{ and, } \text{for all } i \in (n,m), \Int, i \models^{\assign} \psi. \\

 \end{array}
\]
We say that $\varphi$ is \emph{satisfied in $\Int$}
\emph{under $\assign$}, writing $\Int \models^{\assign} \varphi$, if
$\Int, 0 \models^{\assign} \varphi$, and that
$\varphi$ is \emph{satisfied in $\Int$} (or that $\Int$ is a
\emph{model} of $\varphi$), denoted by $\Int \models \varphi$, if
$\Int \models^{\assign} \varphi$, for some $\assign$.
Moreover, $\p$ is said to be \emph{satisfiable} if it is satisfied in
some $\Int$. A formula $\p$ \emph{logically implies} a formula $\psi$
if, for every interpretation $\Int$ and every assignment $\assign$,
$\Int \models^{\assign} \p$ implies $\Int \models^{\assign} \psi$,
and we write $\p \mdl \psi$.
We say that $\p$ and $\psi$ are \emph{equivalent}, writing $\p \equiv \psi$, if $\p \mdl \psi$ and $\psi \mdl \p$.

In addition to the standard Boolean equivalences, the following equivalences hold:
$\bot \equiv p \land \lnot p$ (\emph{bottom}, for an arbitrary but fixed $0$-ary predicate $p$);
$\top \equiv \lnot \bot$ (\emph{top});
$\p \U \psi \equiv \psi \lor (\p \land \p \Until \psi)$
(\emph{reflexive until});
$\Diamond \p \equiv \top \Until \p$ (\emph{sometime});
$\F \p \equiv \top \U \p$ (\emph{reflexive sometime});
$\Box \p \equiv \lnot \Diamond \lnot \p$ (\emph{always});
$\G \p \equiv \lnot \F \lnot \p$ (\emph{reflexive always});
$\p \Release \psi \equiv \lnot ( \lnot \p \Until \lnot \psi)$
(\emph{releases});
$\p \R \psi \equiv \psi \land (\p \lor \p \Release \psi)$;
(\emph{reflexive releases});
$\X \p \equiv \bot \Until \p$ (\emph{strong next});
$l \equiv \lnot \X \top$ (\emph{last point});
and
$\WX \p \equiv l \lor \X\p$ (\emph{weak next}).
Note that, $l$ is true only on the last point of a finite trace, and the weak next operator, $\WX\p$, is relevant only on finite traces where it is satisfied at a time point iff either it is the last instant of the finite trace, or at the next time point $\p$ holds.

\section{First-Order Temporal Logic Tensor Networks}\label{sec:approach}
FOT-LTN is based on Linear Temporal Real Logic over finite traces (LTRL\textsubscript{f}), a new logic, based on FO-LTL\textsubscript{f}, which extends Real Logic with temporal operators. In LTRL\textsubscript{f}, terms and formulas have the following form:
$$
\begin{aligned}
\tau &::= x \mid c \mid f(\tau_1,\dots,\tau_n) \\
\phi &::= \top \mid \bot \mid \tau_1=\tau_2 \mid p(\tau_1,\dots,\tau_n)
\mid \neg \phi
\mid \phi \land \psi \mid \phi \lor \psi \mid \phi \rightarrow \psi \\
&\quad \mid \forall x\, \phi \mid \exists x\, \phi
\mid \X\phi \mid \WX\phi \mid \F\phi \mid \G\phi
\mid \phi\,\U\,\psi \mid \phi\,\R\,\psi .
\end{aligned}
$$
Here $x \in \mathcal{X}$, $c \in \mathcal{C}$, $f \in \mathcal{F}$ is an $n$-ary
function symbol, and $P \in \mathcal{P}$ is an $n$-ary predicate symbol.
Terms and atomic formulas are assumed to be well-typed according to
$\delta$, $\delta_{\mathrm{in}}$, and $\delta_{\mathrm{out}}$.
The constants $\top$ and $\bot$ denote the fuzzy truth constants $1$ and $0$,
respectively. Bounded temporal operators, such as $\F^{\le k}\phi$, $\G^{\le k}\phi$, and
$\phi\,\U^{\le k}\,\psi$ for $k \in \mathbb{N}$, are treated as syntactic
abbreviations whose semantics is obtained by restricting the corresponding
temporal aggregation window to the next $k$ time steps.

\subsection{Grounded Semantics of LTRL\textsubscript{f}}\label{sec:semantics}


The semantics of LTRL\textsubscript{f} follows the one in~\citet{DBLP:journals/corr/abs-1203-6278} but extended to finite traces and first-order predicates instead of standard propositional logic. We operate under the \emph{Constant Domain Assumption}, which states that the grounded domain of quantification $\mathbb{D}$ remains fixed at every timestep, and that objects are not created or destroyed over time. The elements of $\mathbb{D}$ correspond to logical entities in the trace and serve as rigid designators across time. Thus, each variable $x \in \mathcal{X}$ ranges over the same finite grounded domain at every time step, with $\mathcal{G}(x)$ denoting a rigid grounded entity. Importantly, this assumption concerns the identity of the objects quantified over, not the particular neural state through which they are observed. While logical entities are rigid designators across the trace, their observable features naturally evolve. Accordingly, the temporal grounding $\mathcal{G}_t(P)(x)$ evaluates predicate $P$ on entity $x$ at time $t$. 
This allows us to separate object quantification from temporal evaluation: temporal operators move only along the trace index, while quantifiers aggregate over the constant grounded domain.
\begin{solution}
\textcolor{blue}{
It is worth noting that under this assumption, same-polarity combinations such as $\forall x\mathcal{G}(\phi)$ and $\mathcal{G}(\forall x\phi)$ can be written interchangeably under the same aggregation scheme, whereas this does not hold for mixed-polarity combinations.}
\end{solution}
In the following, we assume that the grounding of a formula $\phi$ is evaluated at the initial instant, i.e.\ $\mathcal{G}(\phi) := \mathcal{G}_0(\phi)$.

\begin{definition}[Temporal Grounding]
A temporal grounding $\mathcal{G}$ extends the groundings of Real Logic with a finite time domain $\Tmf=[0,l]$, with $l \in \mathbb{N}$, and $l+1$ groundings, $\G_t$, for each $t\in \Tmf$.
Each predicate $p$ 
is grounded as a time-indexed truth-value map
$
\mathcal{G}_t(p): \mathcal{G}(\delta_{\mathrm{in}}(p)) \to [0,1],
$
where $\mathcal{G}(\delta_{\mathrm{in}}(p))$ denotes the Cartesian product of the grounded input domains associated with $p$, and $t\in \Tmf$.
\end{definition}

\paragraph{Grounding of terms and atomic formulas.} 
The grounding of terms is independent from time and is as defined in section~\ref{sub:sec:RL}.
Since variables are grounded by $\mathcal{G}$ to a sequence of elements then the grounding of function  applications to terms containing variables is no more a single element in $\delta_{out}(f)$ but a tensor of dimension $(|\mathcal{G}(\tau_1)|,\ldots,|\mathcal{G}(\tau_n|)$ whose elements are in $\delta_{out}(f)$, capturing in this way the element-wise application of $f$ to the grounding of its parameters:
$\mathcal{G}\big(f(\tau_1,\dots,\tau_n)\big) =
\mathcal{G}(f)\big(\mathcal{G}(\tau_1),\dots,\mathcal{G}(\tau_n)\big).
$
Note that, in case the parameters are just constants then the above grounding returns a single element in $\delta_{out}(f)$.
Similarly, the grounding of an atomic predicate $p$ at time $t \in \Tmf$ is a tensor of dimension $(|\mathcal{G}(\tau_1)|,\ldots,|\mathcal{G}(\tau_n|)$ whose elements are in $[0,1]$ given by
$\mathcal{G}_t\big(p(\tau_1,\dots,\tau_n)\big)
=
\mathcal{G}_t(p)\big(\mathcal{G}(\tau_1),\dots,\mathcal{G}(\tau_n)\big).
$ Just as an example, after grounding a unary temporal predicate over a variable $x$ such that $|\mathcal{G}(x)| = n_1$ we obtain $l+1$ temporal tensors each in $[0,1]^{n_1}$, while a binary temporal predicate, $p(x,y)$, with $|\mathcal{G}(y)| = n_2$ is a tensor $\mathcal{G}_t(p)(\mathcal{G}(x),\mathcal{G}(y))$ in $[0,1]^{n_1 \times n_2}$, for each $t \in \Tmf$. In the following we will also use, e.g., the tensor in  $[0,1]^{n_1 \times n_2 \times (l+1)}$ to compactly represent all the tensors for the grounding of the binary temporal predicate $p(x,y)$ over the $l+1$ time points.

\paragraph{Grounding of equality.}
The equality between two terms $\tau_1$ and $\tau_2$ is grounded 
by a function representing the degree of equivalence of the terms. Such function is based on euclidean distance and defined as follows:
$
\mathcal{G}_t(\tau_1=\tau_2) = \exp\left(-\alpha \times \|\mathcal{G}(\tau_1) - \mathcal{G}(\tau_2)\|_2\right),
$
where $\alpha>0$ is a hyperparameter and the distance is computed element-wise. 
\paragraph{Grounding of formulas.} 
The grounding $\mathcal{G}_t(\phi)$ for a formula $\phi$ without temporal operators is the fuzzy truth value of $\phi$ at time $t$ according to the semantics of first-order fuzzy logic:
\begin{alignat*}{2}
\mathcal{G}_t(\neg \phi)      &= N(\mathcal{G}_t(\phi)), \quad &
\mathcal{G}_t(\phi \land \psi) &= T(\mathcal{G}_t(\phi), \mathcal{G}_t(\psi)), \\[2mm]
\mathcal{G}_t(\phi \lor \psi) &= S(\mathcal{G}_t(\phi), \mathcal{G}_t(\psi)), \quad &
\mathcal{G}_t(\phi \rightarrow \psi)
&= I(\mathcal{G}_t(\phi), \mathcal{G}_t(\psi)), \\[2mm]
\mathcal{G}_t(\forall x_1,\ldots,x_h\,\phi)
&= \underset{\mathcal{G}(x_1,\ldots,x_h)}{\agg_{\forall}} \mathcal{G}_t(\phi), \quad &
\mathcal{G}_t(\exists x_1,\ldots,x_h\,\phi)
&= \underset{\mathcal{G}(x_1,\ldots,x_h)}{\agg_{\exists}} \mathcal{G}_t(\phi)
\end{alignat*}
where negation, conjunction, disjunction and implication are associated, respectively, with a fuzzy negation ($N$), a t-norm ($T$), a t-conorm ($S$) and a fuzzy implication ($I$). The semantics of the quantifiers is defined with a continuous aggregation operator $\agg: \bigcup_{n\in \mathbb{N}}[0,1]^n \rightarrow [0,1]$ as done for LTN. This operator aggregates over the grounding of the tuple of variables $(x_1,\ldots,x_h)$ contained in $\phi$, i.e., $\mathcal{G}(x_1,\ldots,x_h) = \bigtimes_{i=1}^h \mathcal{G}(x_i)$. We adopt the same operators as in LTN, the generalized $p-$mean $A^{\mathrm{M}}_p$ for $\exists$ and the $p$-mean error $A^{\mathrm{ME}}_p$ for $\forall$:
\begin{equation}
A^{\mathrm{M}}_p(u_1,\dots,u_m)=\!\left(\tfrac{1}{m}\textstyle\sum_{i=1}^m u_i^p\right)^{\!1/p}\!,\quad A^{\mathrm{ME}}_p(u_1,\dots,u_m)=1-\!\left(\tfrac{1}{m}\textstyle\sum_{i=1}^m(1{-}u_i)^p\right)^{\!1/p}\!
\label{eq:aggr}
\end{equation}
where $p \geq 1$ is a hyperparameter dictating the strictness of the approximation.

We extend the above-defined groundings to support temporal operators where the time index $t$ now ranges in $[0,l]$:
\begin{alignat*}{2}
\mathcal{G}_t(\X\phi) &= \begin{cases} \mathcal{G}_{t+1}(\phi) & t<l,\\ 
0 & t=l,\end{cases}\label{eq:semanticsLTRL\textsubscript{f}}
&
\mathcal{G}_t(\WX\phi) &= \begin{cases} \mathcal{G}_{t+1}(\phi) & t<l,\\ 
1 & t=l,\end{cases}
\\[1em]
\mathcal{G}_t(\F\phi) &= \bigoplus_{t'\in[t,l]} \mathcal{G}_{t'}(\phi),
&
\mathcal{G}_t(\G\phi) &= \bigotimes_{t'\in[t,l]} \mathcal{G}_{t'}(\phi),
\\[1em]
\mathcal{G}_t(\phi\,\U\,\psi) &= \bigoplus_{t'\in[t,l]} T\!\left(\mathcal{G}_{t'}(\psi),\;\bigotimes_{v\in[t,t'-1]}\mathcal{G}_v(\phi)\right), \qquad
&
\mathcal{G}_t(\phi\,\R\,\psi) &= N\!\left(\mathcal{G}_t(\neg\phi\,\U\,\neg\psi)\right),
\end{alignat*}
where $\bigoplus$ and $\bigotimes$ denote the fuzzy supremum and infimum operators (with the standard conventions $\bigotimes\limits_{\varnothing}=1$ and $\bigoplus\limits_{\varnothing}=0$) induced by the chosen t-conorm $S$ and t-norm $T$, respectively, as in \citet{DBLP:journals/corr/abs-1203-6278}.
%
%
%
%
%
The Release operator is defined by duality from Until; in this way, it is consistent with the adopted reflexive interpretation of until, where the witnessing time point may be the current one ($t' = t$). In particular, the case $t'=t$ makes $\phi\,\U\,\psi$ immediately satisfied to degree $\mathcal{G}_t(\psi)$, since $\bigotimes\limits_{\varnothing}=1$. Bounded variants can be defined by restricting the aggregation to an interval $[t,\min(t{+}k,l)]$. The use of fuzzy connectives, aggregations, infimum and supremum operators requires additional proofs of the logical equivalences at the end of Section~\ref{sec:foltlf}. Proofs exist in~\cite{DBLP:journals/procsci/DonadelloFIMM25} for propositional fuzzy LTL\textsubscript{f} with the Gödel t-norm.


\paragraph{Differentiability of LTRL\textsubscript{f}}
The operator $\bigoplus$ is implemented as the generalized p-mean $A^{\mathrm{M}}_p$ and $\bigotimes$ as the p-mean error $A^{\mathrm{ME}}_p$, the same aggregators used for existential and universal quantification in LTN, because $A^{\mathrm{M}}_p$ and $A^{\mathrm{ME}}_p$ provide smooth and differentiable approximations of the max and min operators, while converging to them as $p \to \infty$. This preserves the intended fuzzy temporal semantics of operators such as $\F$ and $\G$, while enabling efficient gradient-based optimization. Moreover, using the same aggregators already adopted in LTN for the quantifications, ensures semantic consistency between first-order and temporal reasoning: the temporal operators $\F$ and $\G$ can thus be viewed as temporal counterparts of $\exists$ and $\forall$, respectively, differing only in the domain of aggregation (time points rather than individuals). 

Regarding the choices of t-norms, t-conorms, negation and fuzzy implications, we  adopted the \emph{stable product configuration} (Appendix~\ref{apd:operators}) that is well suited for gradient-descent optimization. Appendix~\ref{apd:example} shows an example of the temporal grounding computation according to the stable product configuration. When the grounding of predicates and functions is implemented through a differentiable function over parameters $\theta$ (e.g., neural networks) the semantics of LTRL\textsubscript{f} is differentiable, see proofs in Appendix~\ref{apd:diff}.

\subsection{Learning in FOT-LTN}
FOT-LTN learns the groundings of predicates/functions (with neural networks) by maximizing the satisfaction of supervised (time-dependent) examples $s_t^{(i)} \in  \mathcal{S}$ and of temporal logical axioms in a knowledge base ($\mathcal{BK}$). Given a set of supervised examples and a temporal knowledge base, the truth values of formulas are computed according to the differentiable semantics of LTRL\textsubscript{f}, and the neural network components are trained to find the network parameters $\theta^*$ that maximize the aggregate satisfaction of both data and logical constraints: $$\theta^* = \argmax_\theta \agg ( \{\mathcal{G}_t(s_t^{(i)})\}_{s_t^{(i)} \in \mathcal{S}} \cup \{\mathcal{G}_0(\phi)\}_{\phi \in \mathcal{BK}}).$$

\section{Experiments}\label{sec:experiments}
We evaluate FOT-LTN on a controlled experiment regarding temporal knowledge graph (TKG) completion on a synthetic car-pedestrian scenario. The goal is to train FOT-LTN on a subset of temporal facts and query the remaining ones. Our research questions regard the higher predictive performance of FOT-LTN w.r.t. purely data-driven models (\textbf{RQ1}) and its capacity of generating predictions that are compliant with background knowledge (\textbf{RQ2}). The source code of the experiments is available as supplementary material. 
\begin{solution}
\textcolor{blue}{
More details regarding the implementation are in Appendix~\ref{apd:implementation}.}
\end{solution}

\subsection{Dataset and Background Knowledge}
We generate two synthetic datasets (CarPed35K, CarPed180K) that model interactions between cars and pedestrian during time instants. The predicates are $\mathsf{Car}$, $\mathsf{Ped}$, $\mathsf{Run}$ and $\mathsf{Stop}$ that describe car states, while $\mathsf{Crossing}$ and $\mathsf{OnSidewalk}$ describe pedestrian states. These are learnable predicates, i.e., their grounding is learned by a data-driven model. The predicate $\mathsf{Close}(x,y)$ relates pairs of nearby entities and is manually defined. This is a reasonable choice as a close relationship can be computed with data coming from cars’ sensors. Our aim is to test FOT-LTN in presence of both learnable and rule-based predicates. The datasets are generated stochastically, using 50 instances of cars, 50 pedestrians, $T = 100$ for CarPed35K and 100 cars, 100 pedestrians and $T = 150$ for CarPed180K. For each pedestrian, we sample uniformly either one or two crossing intervals, whose start is sampled from $\{0,\dots,T-1\}$ and duration from $\{1,\dots,\lfloor T/5\rfloor -1 \}$, clipped at the end of the trace. The $\mathsf{Close}$ predicate facts are sampled from a Bernoulli distribution with probability 0.05 and then made symmetric. Each car instance is initially running. Then, at each time $t < T-1$, if the car is close to at least one crossing pedestrian, it is set to stop at $t+1$. Cars can only run or stop, pedestrian can only cross or stay on the sidewalk.
In total, CarPed35K and CarPed180K have 35,090 and 179,332 positive facts, respectively, see the statistics in Table~\ref{tab:carped_predicate_statistics}. All the other (not instantiated) facts are considered negatives.
\begin{table}[t]
\centering
\caption{Statistics of datasets with the number of positive/total facts and their percentage.}
\label{tab:carped_predicate_statistics}
\setlength{\tabcolsep}{4pt}
\begin{tabular}{lcc}
\toprule
\textbf{Predicate}
& \textbf{CarPed35K}
& \textbf{CarPed180K} \\
\midrule
Car        & $50/100$ \; {\scriptsize $(50.0\%)$}       & $100/200$ \; {\scriptsize $(50.0\%)$} \\
Ped        & $50/100$ \; {\scriptsize $(50.0\%)$}       & $100/200$ \; {\scriptsize $(50.0\%)$} \\
Run        & $3550/5000$ \; {\scriptsize $(71.0\%)$}    & $8085/15000$ \; {\scriptsize $(53.9\%)$} \\
Stop       & $1450/5000$ \; {\scriptsize $(29.0\%)$}    & $6915/15000$ \; {\scriptsize $(46.1\%)$} \\
Crossing   & $687/5000$ \; {\scriptsize $(13.7\%)$}     & $1923/15000$ \; {\scriptsize $(12.8\%)$} \\
OnSidewalk & $4313/5000$ \; {\scriptsize $(86.3\%)$}    & $13077/15000$ \; {\scriptsize $(87.2\%)$} \\
Close      & $24990/10^6$ \; {\scriptsize $(2.5\%)$}    & $149132/(6{\times}10^6)$ \; {\scriptsize $(2.5\%)$} \\
\bottomrule
\end{tabular}
\end{table}

The background knowledge $\mathcal{BK}$ (Table~\ref{tab:axioms}) has been manually defined. Axioms A1-A3 encode the cars behavior according to the crossing behavior (and closeness) of pedestrians. Axiom A4 makes all cars running at the first time step. Axiom A5 encodes the symmetry of the $\mathsf{Close}$ predicate. Axioms A6-A8 encode the actions allowed for cars: only running or stopping in mutual exclusion. A9 states that cars and pedestrian are disjoint concepts. A10-A12 encode the actions allowed for pedestrians: only crossing or staying on the sidewalk in mutual exclusion. A13 states that a crossing action will end on the sidewalk.
\begin{table}[t]
\caption{The background knowledge $\mathcal{BK}$.}
\centering
\footnotesize
\begin{tabular}{@{}cl@{}}
\toprule
\textbf{ID} & \textbf{Axiom} \\
\midrule
A1  & $\forall x\,\forall y\;\G(\mathsf{Car}(x)\land\mathsf{Ped}(y)\land\mathsf{Crossing}(y)\land\mathsf{Close}(x,y)\to\WX\,\mathsf{Stop}(x))$ \\
A2  & $\forall x\,\forall y\;\G(\mathsf{Car}(x)\land\mathsf{Stop}(x)\land\mathsf{Ped}(y)\land\mathsf{Crossing}(y)\land\mathsf{Close}(x,y)\to\F\,\mathsf{Run}(x))$ \\
A3 & $\forall x\;\G(\mathsf{Car}(x)\land\neg\exists y\,(\mathsf{Ped}(y)\land\mathsf{Crossing}(y)\land\mathsf{Close}(x,y))\to\WX\,\mathsf{Run}(x))$ \\
A4 & $\forall x\;(\mathsf{Car}(x)\to\mathsf{Run}(x))$ \\
A5  & $\forall x\,\forall y\;\G(\mathsf{Close}(x,y)\to\mathsf{Close}(y,x))$ \\
A6  & $\forall x\;\G(\mathsf{Car}(x)\to\mathsf{Run}(x)\lor\mathsf{Stop}(x))$ \\
A7  & $\forall x\;\G(\mathsf{Run}(x)\to\neg\mathsf{Stop}(x))$ \\
A8 & $\forall x\;\G(\mathsf{Car}(x)\to\neg\mathsf{Crossing}(x)\land\neg\mathsf{OnSidewalk}(x))$ \\
A9  & $\forall x\;\G(\mathsf{Ped}(x)\to\neg\mathsf{Car}(x))$ \\
A10  & $\forall x\;\G(\mathsf{Crossing}(x)\to\neg\mathsf{OnSidewalk}(x))$ \\
A11 & $\forall x\;\G(\mathsf{Ped}(x)\to\mathsf{Crossing}(x)\lor\mathsf{OnSidewalk}(x))$ \\
A12 & $\forall x\;\G(\mathsf{Ped}(x)\to\neg\mathsf{Run}(x))$ \\
A13 & $\forall x\;\G(\mathsf{Crossing}(x)\to\F\,\mathsf{OnSidewalk}(x))$ \\
\bottomrule
\end{tabular}
\label{tab:axioms}
\end{table}

\subsection{Experimental Design}
\begin{solution}
\todo[inline]{This should be inside FOT-LTN + BK}    
{\color{blue}
As an implementation detail, we set the embedding size to 32 for all FOT-LTN models, although this remains a tunable hyperparameter in the released implementation.
    
The model learns an embedding tensor $G \in \mathbb{R}^{|\mathcal{D}|\times T \times 32}$, with one 32-dimensional embedding for each entity and time instant. The static type predicates $\mathsf{Car}$ and $\mathsf{Ped}$ are evaluated from the initial embedding $G_{x,0}$ and then broadcast over time. The temporal predicates $\mathsf{Run}$, $\mathsf{Stop}$, $\mathsf{Crossing}$, and $\mathsf{OnSidewalk}$ are evaluated independently at each pair $(x,t)$ from $G_{x,t}$.

To encode the datasets as a TKG, we map each entity instance to a node, typed with $\mathsf{Car}$ or $\mathsf{Ped}$. The binary predicate $\mathsf{Close}$ is encoded as temporal edges between entities, while the unary predicates ($\mathsf{Run}$, $\mathsf{Stop}$, $\mathsf{Crossing}$ and $\mathsf{OnSidewalk}$), are not represented as graph edges. Instead, they are treated as node-level prediction targets and are predicted from the evolving temporal node embeddings using dedicated MLP heads.

\[
\mathrm{KB-SAT}
=
\left(
\prod_{i=1}^{|\mathcal{BK}|}
\mathcal{G}_0(\phi_i)
\right)^{\frac{1}{|\mathcal{BK}|}}
=
\exp\left(
\frac{1}{|\mathcal{BK}|}
\sum_{i=1}^{|\mathcal{BK}|}
\log\bigl(\mathcal{G}_0(\phi_i)\bigr)
\right).
\]}
\end{solution}

We evaluate FOT-LTN in a TKG completion task on the CarPed35K and CarPed180K datasets (with both positive and negative facts) with different levels of data availability (from 10\% to 80\%) for training and the rest for test. To make the datasets more realistic, we add some random noise by flipping the truth value of the 10\% of total (positive and negative) training facts. This noise injection affects only the training set whereas we keep the test set clean for a more clear evaluation. To encode the datasets as a TKG, we map each entity instance to a node, typed with $\mathsf{Car}$ or $\mathsf{Ped}$. The binary predicate $\mathsf{Close}$ is encoded as temporal edges between entities, while the unary predicates ($\mathsf{Run}$, $\mathsf{Stop}$, $\mathsf{Crossing}$ and $\mathsf{OnSidewalk}$), are represented as self-temporal edges. We repeat the evaluation 10 times to obtain statistically robust results. We compare three different methods:
\begin{compactitem}[1.]
    \item \textbf{FOT-LTN+$\mathcal{BK}$}: a FOT-LTN predictive model that considers both training examples and the axioms in $\mathcal{BK}$. The loss function $\mathcal{L} = A^{\mathrm{ME}}_p ( \mathit{sat}_{\mathrm{sup}}, \mathit{sat}_{\mathcal{BK}})$ aggregates the satisfiability of the training examples with the one of $\mathcal{BK}$. Here, $\mathit{sat}_{\mathrm{sup}} = A^{\mathrm{ME}}_p(ce_0^{(1)},\dots,ce_T^{(n)})$ where $ce_t^{(i)}$ is the cross entropy between the $i$-th supervised fact $P_t^{(i)}$ (binary or unary) at time $t$ and the corresponding grounding $\mathcal{G}_t(P_t^{(i)})$, i.e., $ce_t^{(i)} = y_t^{(i)}\mathcal{G}_t(P_t^{(i)}) + (1 - y_t^{(i)})(1-\mathcal{G}_t(P_t^{(i)})$, where $y_t^{(i)} \in \{0,1\}$ is the ground-truth label of $P_t^{(i)}$. The $\mathcal{BK}$ satisfiability is $\mathit{sat}_{\mathcal{BK}} = A^{\mathrm{ME}}_p(\mathcal{G}_0(\phi_1),\dots,\mathcal{G}_0(\phi_{|\mathcal{BK}|}))$. We set $p=4$ to highly penalize when one of the two satisfiabilities is low. Each learnable predicate is implemented as an MLP with two hidden layers of 16 ELU units and a sigmoid output.
    \item \textbf{FOT-LTN}: This is an ablation study of the first method, with the same MLPs but without $\mathcal{BK}$. Hence, the loss function $\mathcal{L} = 1 - \mathit{sat}_{\mathrm{sup}}$ considers only the supervision loss.
    \item \textbf{HTGNN}: A heterogeneous temporal graph neural network~\citep{DBLP:journals/corr/abs-2110-13889} developed for predictions tasks on temporal graphs. HTGNN is much more sophisticated than the MLPs used above. Indeed, HTGNN has 1.81x and 1.39x more parameters than FOT-LTN (with and without $\mathcal{BK}$) for CarPed35K and CarPed180K, respectively.
\end{compactitem}
For each method, we use Adam optimizer with a learning rate of $0.005$. The number of epochs is 500 and early stopping is used with patience 30. We use the stable product configuration for the choice of the fuzzy logical connectives and $p=4$ in Equation~\ref{eq:aggr}. 

The performance metrics (both the higher the better) reflect the research questions:
\begin{compactitem}[$\bullet$]
    \item \textbf{PR-AUC}: Area under the precision-recall curve averaged over all the learnable predicates. This measures the link prediction performance and answers to $\textbf{RQ1}$.
    \item \textbf{KB-SAT}: The satisfaction of the $\mathcal{BK}$ axioms according to the learned predicates for answering to $\textbf{RQ2}$. We use the geometric mean (in its logarithmic form) as it offers an interpretable measure of the overall knowledge-base consistency that penalizes low-satisfaction axioms: $\text{KB-SAT} =\exp\left(\frac{1}{|\mathcal{BK}|} \sum_{\phi_i \in \mathcal{BK}}\log\bigl(\mathcal{G}_0(\phi_i)\bigr)\right)$.
\end{compactitem}

\subsection{Results}
Table~\ref{tab:results_exp2} shows the numeric results on the test set according to each level of training data availability (plots are in Appendix~\ref{apd:res_plot}).
\begin{table}[t]
\caption{PR-AUC and KB-Satisfaction. Each entry is \emph{mean $\pm$ std} over 10 independent runs.}
\label{tab:results_exp2}
\centering
\scriptsize
\setlength{\tabcolsep}{2pt}
\renewcommand{\arraystretch}{0.88}
\resizebox{\linewidth}{!}{%
\begin{tabular}{llcccccccc}
\toprule
\textbf{Dataset} & \textbf{Model} & \textbf{10\%} & \textbf{20\%} & \textbf{30\%} & \textbf{40\%} & \textbf{50\%} & \textbf{60\%} & \textbf{70\%} & \textbf{80\%} \\
\midrule
\multicolumn{10}{c}{\textit{PR-AUC}} \\
\midrule
\multirow{3}{*}{CarPed35K}
 & FOT-LTN+$\mathcal{BK}$     & \textbf{.727$\pm$.010} & \textbf{.726$\pm$.012} & \textbf{.736$\pm$.010} & \textbf{.754$\pm$.011} & \textbf{.772$\pm$.009} & \textbf{.795$\pm$.010} & \textbf{.819$\pm$.010} & \textbf{.838$\pm$.011} \\
 & FOT-LTN & .597$\pm$.020 & .617$\pm$.041 & .629$\pm$.030 & .631$\pm$.039 & .670$\pm$.029 & .688$\pm$.024 & .710$\pm$.026 & .742$\pm$.036 \\
 & HTGNN          & .616$\pm$.025 & .660$\pm$.016 & .702$\pm$.027 & .730$\pm$.025 & .765$\pm$.026 & .793$\pm$.031 & .807$\pm$.034 & .828$\pm$.022 \\
\midrule
\multirow{3}{*}{CarPed180K}
 & FOT-LTN+$\mathcal{BK}$     & \textbf{.719$\pm$.007} & \textbf{.713$\pm$.015} & \textbf{.715$\pm$.009} & \textbf{.729$\pm$.008} & .744$\pm$.010 & .766$\pm$.009 & .797$\pm$.007 & .820$\pm$.012 \\
 & FOT-LTN & .574$\pm$.026 & .607$\pm$.035 & .631$\pm$.029 & .648$\pm$.023 & .680$\pm$.028 & .708$\pm$.024 & .746$\pm$.029 & .795$\pm$.029 \\
 & HTGNN          & .603$\pm$.021 & .634$\pm$.039 & .679$\pm$.033 & .723$\pm$.032 & \textbf{.758$\pm$.036} & \textbf{.786$\pm$.023} & \textbf{.802$\pm$.020} & \textbf{.831$\pm$.016} \\
\midrule
\multicolumn{10}{c}{\textit{KB-Satisfaction}} \\
\midrule
\multirow{3}{*}{CarPed35K}
 & FOT-LTN+$\mathcal{BK}$     & \textbf{.943$\pm$.010} & \textbf{.916$\pm$.009} & \textbf{.903$\pm$.009} & \textbf{.891$\pm$.006} & \textbf{.884$\pm$.004} & \textbf{.878$\pm$.003} & \textbf{.873$\pm$.004} & \textbf{.868$\pm$.003} \\
 & FOT-LTN & .572$\pm$.038 & .524$\pm$.036 & .534$\pm$.021 & .515$\pm$.020 & .521$\pm$.024 & .510$\pm$.012 & .517$\pm$.017 & .507$\pm$.013 \\
 & HTGNN          & .568$\pm$.037 & .565$\pm$.025 & .558$\pm$.026 & .558$\pm$.019 & .551$\pm$.028 & .547$\pm$.025 & .535$\pm$.028 & .524$\pm$.019 \\
\midrule
\multirow{3}{*}{CarPed180K}
 & FOT-LTN+$\mathcal{BK}$ & \textbf{.939$\pm$.008} & \textbf{.918$\pm$.007} & \textbf{.904$\pm$.004} & \textbf{.892$\pm$.004} & \textbf{.888$\pm$.003} & \textbf{.882$\pm$.003} & \textbf{.878$\pm$.002} & \textbf{.873$\pm$.002} \\
 & FOT-LTN & .549$\pm$.035 & .536$\pm$.034 & .519$\pm$.018 & .519$\pm$.024 & .513$\pm$.017 & .510$\pm$.014 & .517$\pm$.012 & .505$\pm$.012 \\
 & HTGNN          & .572$\pm$.053 & .576$\pm$.041 & .559$\pm$.031 & .561$\pm$.028 & .558$\pm$.036 & .548$\pm$.038 & .542$\pm$.034 & .529$\pm$.028 \\
\bottomrule
\end{tabular}%
}
\end{table}
FOT-LTN+$\mathcal{BK}$ presents higher PR-AUC than both FOT-LTN and HTGNN when training data availability is scarce, and comparable performance with larger percentage of data. We stress the fact that FOT-LTN+$\mathcal{BK}$ is a general framework, not tailored for TKG completion as HTGNN and with less parameters. This indicates that including temporal axioms was particularly effective, mostly in low-data regimes. As supervision increases, the PR-AUC gap between axiom-guided and purely supervised models is expected to narrow, since the target predicates become increasingly observed. For KB-SAT, FOT-LTN+$\mathcal{BK}$ dominates the other approaches, displaying constant high values across all percentages. On the other hand, FOT-LTN and HTGNN always achieve a very low KB-SAT even when the PR-AUC is high. 
This reveals good overall prediction performance of the non-Nesy methods but non-compliant for some time instants whereas FOT-LTN+$\mathcal{BK}$ ensures largely compliant predictions. This is crucial for some critical applications, such as a car-pedestrian scenario, where failing a prediction even in a single instant could lead to catastrophic results. We also performed some statistical tests to understand at which training data availability FOT-LTN + $\mathcal{BK}$ does not outperform HTGNN anymore on the PR-AUC. FOT-LTN + $\mathcal{BK}$ significantly outperforms HTGNN till $30\%$ of data supervision. From $40\%$ onward, the difference is not statistically significant (see Appendix~\ref{apd:statistical_test}). Therefore, we can positively answer to both our research questions.

\begin{solution}
{\color{blue}Appendix~\ref{apd:res_plot} contains some plots that show the results of Table~\ref{tab:results_exp2}. In addition, the embeddings for cars and pedestrians according to the two principal components are shown. The two sets of individuals are linearly separable indicating that FOT-LTN has learned useful embeddings according the objects behaviors.}

{\color{blue} Since implication-based axioms with sparse antecedents can be satisfied vacuously \citep{DBLP:journals/corr/abs-2002-06100}, KB-Satisfaction should be interpreted as a measure of logical consistency rather than as a direct substitute for predictive performance. Furthermore, reasoning shortcuts \citep{DBLP:journals/corr/abs-2510-14538} should also be taken into consideration, as vacuous satisfaction prevents the model to learn rules properly.}
\end{solution}

\section{Related Work}\label{sec:related}
One prominent Neuro-Symbolic (NeSy) approach is the encoding of logical knowledge into a differentiable form. Logic Tensor Networks (LTNs)~\citep{serafini_garcez_2020} is based on the differentiable Real Logic. Each prediction is evaluated against a knowledge base and a semantic loss term is hereby computed. Another approach is to encode the knowledge directly into the neural architecture, see, for instance, Logical Neural Networks~\citep{DBLP:journals/corr/abs-2006-13155}. Other approaches include probabilistic reasoning, for example the work in~\citep{DBLP:conf/nips/ManhaeveDKDR18} extends probabilistic logic programming with neural predicates. More works can be found dedicated in surveys~\citep{DBLP:journals/ai/MarraDMR24}. While achieving solid performances, the above-mentioned systems focus on static, non-temporal knowledge.

Given the importance of time-related constraints, new methods are investigating the topic. An early temporal NeSy event recognition model~\citep{DBLP:conf/time/ApricenoPS21} focuses on structured event recognition under shallow supervision. The work in~\citep{DBLP:journals/corr/abs-2303-17892} introduces Interval Logic Tensor Networks that extends Real Logic with interval-based reasoning, via fuzzy intervals and relations between events. Another temporal aspect regards the flow of events through a linear-time formalism expressed by LTL on finite traces (LTL\textsubscript{f}). The work in ~\citep{DBLP:journals/corr/abs-2405-06670} explores differentiable LTL\textsubscript{f} constraints through a tailored smooth (real-valued) semantics for video activity recognition. Differentiable LTL\textsubscript{f} constraints are leveraged also in~\citep{DBLP:conf/nesy/AndreoniBDGMR25} with a standard LTL\textsubscript{f} fuzzy semantics~\citep{DBLP:journals/procsci/DonadelloFIMM25} for image-sequence classification. Other LTL\textsubscript{f}-based losses are encoded through differentiable automata for next activity prediction in Predictive Process Monitoring (PPM)~\citep{nesyppm} and sequence-image classification~\citep{DBLP:conf/ijcai/ManginasPR25}. A different approach encodes LTL\textsubscript{f} constraints into an automaton so as to guide a beam search algorithm at inference time to predict sequences of symbols (traces) that are compliant with the constraints. This has been tested in PPM with a probabilistic fragment of LTL\textsubscript{f}~\citep{10.1145/3810944} and in constrained text generation for Large Language Models~\citep{collura2025absenforcingconstraintsatisfaction}. These methods are limited to propositional LTL\textsubscript{f} and do not consider objects and their relations as done by FOT-LTN. Moreover, the direct (differentiable) semantics evaluation performed by FOT-LTN avoids the construction of automata that requires exponential time. The work in ~\citep{DBLP:conf/ijcai/Lorello0M25} consists of a multi-stage NeSy architecture that combines perception, relational and temporal reasoning within a unified sequence-classification framework. The relational component relies on Datalog-based reasoning, while temporal reasoning is performed over propositional temporal specifications. Unlike FOT-LTN, this temporal language does not support explicit quantification over individuals across time obtaining a less expressive NeSy framework. The proposed architecture is primarily designed for sequence-classification and does not directly support more general temporal reasoning problems, such as TKG completion.


\section{Conclusion}\label{sec:conclusion}
We introduced First-Order Temporal Logic Tensor Networks (FOT-LTN) a new NeSy framework that jointly learns and reasons with objects whose properties, and relations, vary according to a linear time domain over finite traces. Up to our knowledge, this is the first NeSy framework with such an expressivity. We provide a first validation on a task of temporal knowledge graph completion obtaining higher classification performance w.r.t. a dedicated deep learning method in presence of scarce supervision and predictions much more compliant with the input knowledge for all supervision levels. As future work, more experiments on real datasets involving perception (e.g., temporal visual-question answering or semantic video interpretation) will be performed as well as a study on the reasoning capabilities and time complexity of FOT-LTN. In addition, the expressivity will be increased by adding past operators in the logical language.

\acks{Acknowledgements omitted for anonymous review.}

\bibliography{main}

@BOOK{Di_Francescomarino2026-uy,
  title     = {Predictive Process Monitoring},
  author    = {Chiara {D}i {F}rancescomarino and Ivan Donadello and Fabrizio Maria Maggi},
  isbn         = {978-3-032-17277-8},
    doi          = {10.1007/978-3-032-17278-5},
  publisher = {Springer Nature Switzerland},
  year      =  {2026}
}

@incollection{SerafiniGBDSB21,
  author       = {Luciano Serafini and
                  Artur S. d'Avila Garcez and
                  Samy Badreddine and
                  Ivan Donadello and
                  Michael Spranger and
                  Federico Bianchi},
  editor       = {Pascal Hitzler and
                  Md. Kamruzzaman Sarker},
  title        = {Logic {T}ensor {N}etworks: Theory and {A}pplications},
  booktitle    = {Neuro-Symbolic Artificial Intelligence: The State of the Art},
  series       = {Frontiers in Artificial Intelligence and Applications},
  volume       = {342},
  pages        = {370--394},
  publisher    = {{IOS} Press},
  year         = {2021},
  doi          = {10.3233/FAIA210498},
  bibsource    = {dblp computer science bibliography, https://dblp.org}
}

@inproceedings{DBLP:conf/nesy/SerafiniG16,
  author       = {Luciano Serafini and
                  Artur S. d'Avila Garcez},
  title        = {Logic Tensor Networks: Deep Learning and Logical Reasoning from Data
                  and Knowledge},
  booktitle    = {NeSy@HLAI},
  series       = {{CEUR} Workshop Proceedings},
  publisher    = {CEUR-WS.org},
  year         = {2016}
}

@article{serafini_garcez_2020,
  title     = {Logic Tensor Networks},
  volume    = {303},
  ISSN      = {0004-3702},
  url       = {http://dx.doi.org/10.1016/j.artint.2021.103649},
  DOI       = {10.1016/j.artint.2021.103649},
  journal   = {Artificial Intelligence},
  publisher = {Elsevier BV},
  author    = {Badreddine, Samy and d'Avila Garcez, Artur and Serafini, Luciano and Spranger, Michael},
  year      = {2022},
  month     = feb,
  pages     = {103649}
}

@article{DBLP:journals/corr/abs-1203-6278,
  author       = {Achille Frigeri and
                  Liliana Pasquale and
                  Paola Spoletini},
  title        = {Fuzzy Time in {LTL}},
  journal      = {CoRR},
  volume       = {abs/1203.6278},
  year         = {2012}
}

@inproceedings{DBLP:conf/ijcai/GiacomoV13,
  author       = {Giuseppe De Giacomo and
                  Moshe Y. Vardi},
  title        = {Linear Temporal Logic and Linear Dynamic Logic on Finite Traces},
  booktitle    = {{IJCAI}},
  pages        = {854--860},
  publisher    = {{IJCAI/AAAI}},
  year         = {2013}
}

@article{nesyppm,
author = {Axel Mezini and Elena Umili and Ivan Donadello and Fabrizio Maria Maggi and Matteo Mancanelli and Fabio Patrizi},
title = {Neuro-Symbolic Predictive Process Monitoring},
journal = {Information Systems},
volume = {141},
pages = {102762},
year = {2026},
issn = {0306-4379},
doi = {https://doi.org/10.1016/j.is.2026.102762},
url = {https://www.sciencedirect.com/science/article/pii/S0306437926000761}
}

@article{10.1145/3810944,
author = {Alman, Anti and Comuzzi, Marco and Di Francescomarino, Chiara and Donadello, Ivan and Maggi, Fabrizio Maria and Oukharijane, jamila},
title = {Definitely Maybe: Neuro-Symbolic Predictive Process Monitoring with Probabilistic Declarative Knowledge},
year = {2026},
publisher = {Association for Computing Machinery},
address = {New York, NY, USA},
issn = {2157-6904},
url = {https://doi.org/10.1145/3810944},
doi = {10.1145/3810944},
note = {Just Accepted},
journal = {ACM Trans. Intell. Syst. Technol.},
month = apr
}

@article{DBLP:journals/ai/KriekenAH22,
  author       = {Emile van Krieken and
                  Erman Acar and
                  Frank van Harmelen},
  title        = {Analyzing Differentiable Fuzzy Logic Operators},
  journal      = {Artif. Intell.},
  volume       = {302},
  pages        = {103602},
  year         = {2022},
  url          = {https://doi.org/10.1016/j.artint.2021.103602},
  doi          = {10.1016/J.ARTINT.2021.103602},
  bibsource    = {dblp computer science bibliography, https://dblp.org}
}

@article{DBLP:journals/corr/abs-2405-06670,
  author       = {Danyang Li and
                  Mingyu Cai and
                  Cristian{-}Ioan Vasile and
                  Roberto Tron},
  title        = {TLINet: Differentiable Neural Network Temporal Logic Inference},
  journal      = {CoRR},
  volume       = {abs/2405.06670},
  year         = {2024}
}

@inproceedings{DBLP:conf/nesy/AndreoniBDGMR25,
  author       = {Riccardo Andreoni and
                  Andrei Buliga and
                  Alessandro Daniele and
                  Chiara Ghidini and
                  Marco Montali and
                  Massimiliano Ronzani},
  title        = {{T-ILR:} a Neurosymbolic Integration for LTLf},
  booktitle    = {NeSy},
  series       = {Proceedings of Machine Learning Research},
  pages        = {252--265},
  publisher    = {{PMLR}},
  year         = {2025}
}

@inproceedings{DBLP:conf/ijcai/ManginasPR25,
  author       = {Nikolaos Manginas and
                  George Paliouras and
                  Luc De Raedt},
  title        = {NeSyA: Neurosymbolic Automata},
  booktitle    = {{IJCAI}},
  pages        = {5950--5958},
  publisher    = {ijcai.org},
  year         = {2025}
}

@article{DBLP:journals/procsci/DonadelloFIMM25,
  author       = {Ivan Donadello and
                  Paolo Felli and
                  Craig Innes and
                  Fabrizio Maria Maggi and
                  Marco Montali},
  title        = {{LTL}-based conformance checking of fuzzy event logs},
  journal      = {Process Sci.},
  volume       = {2},
  number       = {1},
  year         = {2025}
}

@article{DBLP:journals/corr/abs-2510-14538,
  author       = {Emanuele Marconato and
                  Samuele Bortolotti and
                  Emile van Krieken and
                  Paolo Morettin and
                  Elena Umili and
                  Antonio Vergari and
                  Efthymia Tsamoura and
                  Andrea Passerini and
                  Stefano Teso},
  title        = {Symbol Grounding in Neuro-Symbolic {AI:} {A} Gentle Introduction to
                  Reasoning Shortcuts},
  journal      = {CoRR},
  volume       = {abs/2510.14538},
  year         = {2025}
}

@article{DBLP:journals/corr/abs-2110-13889,
  author       = {Yujie Fan and
                  Mingxuan Ju and
                  Chuxu Zhang and
                  Liang Zhao and
                  Yanfang Ye},
  title        = {Heterogeneous Temporal Graph Neural Network},
  journal      = {CoRR},
  volume       = {abs/2110.13889},
  year         = {2021}
}

@article{DBLP:journals/corr/abs-2002-06100,
  author       = {Emile van Krieken and
                  Erman Acar and
                  Frank van Harmelen},
  title        = {Analyzing Differentiable Fuzzy Logic Operators},
  journal      = {CoRR},
  volume       = {abs/2002.06100},
  year         = {2020},
  url          = {https://arxiv.org/abs/2002.06100},
  eprinttype   = {arXiv},
  eprint       = {2002.06100},
  bibsource    = {dblp computer science bibliography, https://dblp.org}
}

@article{DBLP:journals/corr/abs-2006-13155,
  author       = {Ryan Riegel and
                  Alexander G. Gray and
                  Francois P. S. Luus and
                  Naweed Khan and
                  Ndivhuwo Makondo and
                  Ismail Yunus Akhalwaya and
                  Haifeng Qian and
                  Ronald Fagin and
                  Francisco Barahona and
                  Udit Sharma and
                  Shajith Ikbal and
                  Hima Karanam and
                  Sumit Neelam and
                  Ankita Likhyani and
                  Santosh K. Srivastava},
  title        = {Logical Neural Networks},
  journal      = {CoRR},
  volume       = {abs/2006.13155},
  year         = {2020},
  url          = {https://arxiv.org/abs/2006.13155},
  eprinttype   = {arXiv},
  eprint       = {2006.13155},
  bibsource    = {dblp computer science bibliography, https://dblp.org}
}

@inproceedings{DBLP:conf/ijcai/ArtaleMO19,
  author       = {Alessandro Artale and
                  Andrea Mazzullo and
                  Ana Ozaki},
  title        = {Do You Need Infinite Time?},
  booktitle    = {{IJCAI}},
  pages        = {1516--1522},
  publisher    = {ijcai.org},
  year         = {2019}
}

@article{DBLP:journals/tocl/ArtaleMO24,
  author       = {Alessandro Artale and
                  Andrea Mazzullo and
                  Ana Ozaki},
  title        = {First-Order Temporal Logic on Finite Traces: Semantic Properties,
                  Decidable Fragments, and Applications},
  journal      = {{ACM} Trans. Comput. Log.},
  volume       = {25},
  number       = {2},
  pages        = {13:1--13:43},
  year         = {2024}
}

@article{Gabbay2005-GABMML-2,
	author = {D. M. Gabbay and A. Kurucz and F. Wolter and M. Zakharyaschev},
	journal = {Studia Logica},
	number = {1},
	pages = {147--150},
	title = {Many-Dimensional Modal Logics: Theory and Applications},
	volume = {81},
	year = {2005}
}

@article{DBLP:journals/ai/MarraDMR24,
  author       = {Giuseppe Marra and
                  Sebastijan Dumancic and
                  Robin Manhaeve and
                  Luc De Raedt},
  title        = {From statistical relational to neurosymbolic artificial intelligence:
                  {A} survey},
  journal      = {Artif. Intell.},
  volume       = {328},
  pages        = {104062},
  year         = {2024}
}

@inproceedings{DBLP:conf/nips/ManhaeveDKDR18,
  author       = {Robin Manhaeve and
                  Sebastijan Dumancic and
                  Angelika Kimmig and
                  Thomas Demeester and
                  Luc De Raedt},
  title        = {DeepProbLog: Neural Probabilistic Logic Programming},
  booktitle    = {NeurIPS},
  pages        = {3753--3763},
  year         = {2018}
}

@inproceedings{DBLP:conf/time/ApricenoPS21,
  author       = {Gianluca Apriceno and
                  Andrea Passerini and
                  Luciano Serafini},
  title        = {A Neuro-Symbolic Approach to Structured Event Recognition},
  booktitle    = {{TIME}},
  series       = {LIPIcs},
  pages        = {11:1--11:14},
  publisher    = {Schloss Dagstuhl - Leibniz-Zentrum f{\"{u}}r Informatik},
  year         = {2021}
}

@article{DBLP:journals/corr/abs-2303-17892,
  author       = {Samy Badreddine and
                  Gianluca Apriceno and
                  Andrea Passerini and
                  Luciano Serafini},
  title        = {Interval Logic Tensor Networks},
  journal      = {CoRR},
  volume       = {abs/2303.17892},
  year         = {2023}
}

@misc{collura2025absenforcingconstraintsatisfaction,
      title={ABS: Enforcing Constraint Satisfaction On Generated Sequences Via Automata-Guided Beam Search}, 
      author={Vincenzo Collura and Karim Tit and Laura Bussi and Eleonora Giunchiglia and Maxime Cordy},
      year={2025},
      eprint={2506.09701},
      archivePrefix={arXiv},
      primaryClass={cs.LG},
      url={https://arxiv.org/abs/2506.09701}, 
}

@inproceedings{DBLP:conf/ijcai/Lorello0M25,
  author       = {Luca Salvatore Lorello and
                  Marco Lippi and
                  Stefano Melacci},
  title        = {A Neuro-Symbolic Framework for Sequence Classification with Relational
                  and Temporal Knowledge},
  booktitle    = {{IJCAI}},
  pages        = {5833--5841},
  publisher    = {ijcai.org},
  year         = {2025}
}

\appendix

\section{Stable Product Configuration}\label{apd:operators}
The stable product configuration (see~\citet{serafini_garcez_2020}) ensures a fuzzy logic semantics that is well suited for gradient-descent optimization:
\begin{align*}
T(u,v) &= u\cdot v && \text{(product t-norm)}\\
N(u) &= 1-u && \text{(standard negation)}\\
S(u,v) &= u+v-uv && \text{(probabilistic sum)}\\
I(u,v) &= 1-u+uv && \text{(Reichenbach implication)}\\
A^{\mathrm{M}}_p(u_1,\dots,u_m) &= \left(\frac{1}{m}\sum_{i=1}^m u_i^p\right)^{1/p} && \text{(p-Mean)}\\
A^{\mathrm{ME}}_p(u_1,\dots,u_m) &= 1-\left(\frac{1}{m}\sum_{i=1}^m(1-u_i)^p\right)^{1/p} && \text{(p-Mean Error)}
\end{align*}

\section{Illustrative Example}\label{apd:example}
We illustrate the grounding mechanism of LTRL\textsubscript{f} by considering a car-pedestrian scenario with by the predicates $\mathsf{Car}$, $\mathsf{Ped}$, $\mathsf{Run}$ and $\mathsf{Stop}$ that describe car states, while $\mathsf{Crossing}$ and $\mathsf{OnSidewalk}$ describe pedestrian states. These are learnable predicates, i.e., their grounding is learned by a data-driven model. The predicate $\mathsf{Close}(x,y)$ relates pairs of nearby entities and is not learned. This is reasonable as cars, through their sensors, could compute the distance between themselves and other objects.

We now show the grounding computation for the formula
$
\phi: \forall x\;\forall y\;\G\bigl(\mathsf{Car}(x)\land\mathsf{Ped}(y)\land\mathsf{Crossing}(y)\land\mathsf{Close}(x,y)\to\WX\,\mathsf{Stop}(x)\bigr),
$
stating that it always holds that whenever a car $x$ is close to a crossing pedestrian $y$, the car must stop at the next time step. For the computation of $\mathcal{G_0(\phi)}$, we use the product t-norm and $p$-mean approximations with $p=2$. We consider a domain with two objects $\mathsf{c}_1$ (a car) and $\mathsf{p}_1$ (a pedestrian) over $T=5$ time steps, and assume the learned predicate groundings reported in Table~\ref{tab:Ex} (left).
\begin{table}[!htb]
\caption{Predicate groundings on the left table and pointwise evaluation of the antecedent,
consequent and implication on the right table.}
\label{tab:Ex}
    \begin{minipage}{.5\linewidth}
      \centering
\begin{tabular}{llccccc}
\toprule
& \textbf{Predicate} & $t{=}0$ & $t{=}1$ & $t{=}2$ & $t{=}3$ & $t{=}4$ \\
\midrule
$\mathsf{c}_1$ & $\mathsf{Car}$      & 0.99 & 0.99 & 0.99 & 0.99 & 0.99 \\
$\mathsf{c}_1$ & $\mathsf{Ped}$      & 0.01 & 0.01 & 0.01 & 0.01 & 0.01 \\
$\mathsf{c}_1$ & $\mathsf{Crossing}$ & 0.01 & 0.01 & 0.01 & 0.01 & 0.01 \\
$\mathsf{c}_1$ & $\mathsf{Stop}$     & 0.05 & 0.10 & 0.90 & 0.85 & 0.20 \\
\midrule
$\mathsf{p}_1$ & $\mathsf{Car}$      & 0.01 & 0.01 & 0.01 & 0.01 & 0.01 \\
$\mathsf{p}_1$ & $\mathsf{Ped}$      & 0.99 & 0.99 & 0.99 & 0.99 & 0.99 \\
$\mathsf{p}_1$ & $\mathsf{Crossing}$ & 0.10 & 0.90 & 0.80 & 0.10 & 0.05 \\
$\mathsf{p}_1$ & $\mathsf{Stop}$     & 0.05 & 0.05 & 0.05 & 0.05 & 0.05 \\
\midrule
\multicolumn{2}{l}{$\mathsf{Close}(\mathsf{c}_1,\mathsf{c}_1)$}
& 0.01 & 0.01 & 0.01 & 0.01 & 0.01 \\
\multicolumn{2}{l}{$\mathsf{Close}(\mathsf{c}_1,\mathsf{p}_1)$}
& 0.05 & 0.85 & 0.80 & 0.15 & 0.05 \\
\multicolumn{2}{l}{$\mathsf{Close}(\mathsf{p}_1,\mathsf{c}_1)$}
& 0.05 & 0.85 & 0.80 & 0.15 & 0.05 \\
\multicolumn{2}{l}{$\mathsf{Close}(\mathsf{p}_1,\mathsf{p}_1)$}
& 0.01 & 0.01 & 0.01 & 0.01 & 0.01 \\
\bottomrule
\end{tabular}
    \end{minipage}%
    \begin{minipage}{.5\linewidth}
      \centering
        \begin{tabular}{llccccc}
\toprule
\((x,y)\) & Value & \(t=0\) & \(t=1\) & \(t=2\) & \(t=3\) & \(t=4\) \\
\midrule
\multirow{3}{*}{\((\mathsf{c}_1,\mathsf{c}_1)\)}
& \(a_{x,y}(t)\) & 0.000 & 0.000 & 0.000 & 0.000 & 0.000 \\
& \(b_x(t)\)     & 0.100 & 0.900 & 0.850 & 0.200 & 1.000 \\
& \(v_{x,y}(t)\) & 1.000 & 1.000 & 1.000 & 1.000 & 1.000 \\
\midrule
\multirow{3}{*}{\((\mathsf{c}_1,\mathsf{p}_1)\)}
& \(a_{x,y}(t)\) & 0.005 & 0.750 & 0.627 & 0.015 & 0.002 \\
& \(b_x(t)\)     & 0.100 & 0.900 & 0.850 & 0.200 & 1.000 \\
& \(v_{x,y}(t)\) & 0.996 & 0.925 & 0.906 & 0.988 & 1.000 \\
\midrule
\multirow{3}{*}{\((\mathsf{p}_1,\mathsf{c}_1)\)}
& \(a_{x,y}(t)\) & 0.000 & 0.000 & 0.000 & 0.000 & 0.000 \\
& \(b_x(t)\)     & 0.050 & 0.050 & 0.050 & 0.050 & 1.000 \\
& \(v_{x,y}(t)\) & 1.000 & 1.000 & 1.000 & 1.000 & 1.000 \\
\midrule
\multirow{3}{*}{\((\mathsf{p}_1,\mathsf{p}_1)\)}
& \(a_{x,y}(t)\) & 0.000 & 0.000 & 0.000 & 0.000 & 0.000 \\
& \(b_x(t)\)     & 0.050 & 0.050 & 0.050 & 0.050 & 1.000 \\
& \(v_{x,y}(t)\) & 1.000 & 1.000 & 1.000 & 1.000 & 1.000 \\
\bottomrule
\end{tabular}
    \end{minipage} 
\end{table}
For each grounding of the pair of variables $(x,y)$ and for each time point $t$, the antecedent is computed as
$
a_{x,y}(t)
=
\mathsf{Car}(x,t)\cdot
\mathsf{Ped}(y,t)\cdot
\mathsf{Crossing}(y,t)\cdot
\mathsf{Close}(x,y,t).
$
The consequent is evaluated through the $\WX$ operator, defined as above: $b_x(t)= \mathsf{Stop}(x,t+1)$ if $t<4$, otherwise $b_x(t)= 1$ if $t=4$. Applying the
Reichenbach implication $I(a,b)=1-a+ab$, we obtain the pointwise truth value $v_{x,y}(t)=I(a_{x,y}(t),b_x(t))$.

As $\phi$ is universally quantified, we evaluate all possible couples
$(x,y)\in\{\mathsf{c}_1,\mathsf{p}_1\}^2$. Table~\ref{tab:Ex} (right)
reports the antecedent $a_{x,y}(t)$, the consequent $b_x(t)$, and the resulting
implication value $v_{x,y}(t)$ for each assignment and time step.
%
The only substantially non-vacuous assignment is $(x,y)=(\mathsf{c}_1,\mathsf{p}_1)$. At $t=1$ and $t=2$, the antecedent is high because $\mathsf{p}_1$ is crossing and close to $\mathsf{c}_1$, and the consequent is also high because $\mathsf{c}_1$ stops at the next time step. At $t=0$ and $t=3$, the antecedent is low, so the implication is almost vacuously satisfied. At $t=4$, the implication is satisfied because $\WX$ is true at the last time step.

The first aggregation is temporal: the operator $\G$ is applied independently
for each grounding of $(x,y)$ by aggregating the corresponding implication values
over time:$
s_{x,y}
=
A^{\mathrm{ME}}_2\bigl(v_{x,y}(0),\ldots,v_{x,y}(4)\bigr).
$
For the non-vacuous assignment $(x,y)=(\mathsf{c}_1,\mathsf{p}_1)$, this gives
\[
\begin{aligned}
s_{\mathsf{c}_1,\mathsf{p}_1}
&=
A^{\mathrm{ME}}_2(0.996,\,0.925,\,0.906,\,0.988,\,1.000) =
1-\left(\sum_{t=0}^{4}
(1-v_{\mathsf{c}_1,\mathsf{p}_1}(t))^2/5\right)^{1/2}
\approx 0.946.
\end{aligned}
\]
Applying the same temporal aggregation to all assignments gives $s_{x,y} = 0.964$ for the pair $(\mathsf{c}_1,\mathsf{p}_1)$ and $s_{x,y} = 0.964$ for the other three pairs. The second aggregation is first-order. The universal quantifiers aggregate the
assignment-level truth values, again using $A^{\mathrm{ME}}_2$:
\[
\begin{aligned}
\mathcal{G}_0(\phi)
&=
A^{\mathrm{ME}}_2
\bigl(
s_{\mathsf{c}_1,\mathsf{c}_1},
s_{\mathsf{c}_1,\mathsf{p}_1},
s_{\mathsf{p}_1,\mathsf{c}_1},
s_{\mathsf{p}_1,\mathsf{p}_1}
\bigr) =
A^{\mathrm{ME}}_2
\bigl(
1.000,\,
0.946,\,
1.000,\,
1.000
\bigr) \\
&=
1-\sqrt{(
(1-1.000)^2+
(1-0.946)^2+
(1-1.000)^2+
(1-1.000)^2)/4}
\approx 0.973.
\end{aligned}
\]
Thus, the grounding of the whole formula is high because the relevant car--pedestrian grounding satisfies the temporal constraint, while the other groundings are almost vacuously satisfied due to their low antecedent values.

\section{Differentiability of LTRL\textsubscript{f}}\label{apd:diff}
We show that $\mathcal{G}_t(\phi;\theta)$ is differentiable in $\theta$ for any LTRL\textsubscript{f} formula $\phi$. Here, we make the dependence on the learnable parameters $\theta$ explicit in the grounding $\mathcal{G}_t$. \begin{theorem}[Differentiability]\label{thm:diff}
Assume:
\begin{compactenum}[1.]
    \item atomic groundings for predicates and functions, $\mathcal{G}_t(P;\theta)$ and $\mathcal{G}_t(f;\theta)$, respectively, are differentiable in $\theta$;
    \item logical connectives are implemented via differentiable fuzzy operators (e.g., product t-norm);
    \item quantifiers and temporal operators ($\F$, $\G$) are implemented via $p$-mean aggregation $A_p^M$ (for $\exists$ and $\F$) and $p$-mean error aggregation $A_p^{ME}$ (for $\forall$ and $\G$) that have already been proved to be differentiable~\citep{DBLP:journals/ai/KriekenAH22}.
\end{compactenum}
Then, for any LTRL\textsubscript{f} formula $\phi$ and time $t$, the grounding $\mathcal{G}_t(\phi;\theta)$ is differentiable in $\theta$.
\end{theorem}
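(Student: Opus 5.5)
We argue by structural induction on $\phi$, proving the stronger claim that for every formula $\phi$ the whole family $\{\mathcal{G}_t(\phi;\theta)\}_{t\in\Tmf}$ (each a finite tensor indexed by the groundings of the free variables of $\phi$) is differentiable in $\theta$. The induction has to quantify over all $t$ simultaneously, because the temporal clauses refer to $\mathcal{G}_{t'}(\phi)$ at time points $t'\neq t$. Differentiability of a tensor means differentiability of each entry. The statement then follows by the chain rule, since every clause of the semantics is a composition of finitely many differentiable maps. Since $\Tmf=[0,l]$ and all variable groundings are finite sequences, every aggregation has finitely many arguments.

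\emph{Base cases.} Terms are handled by a separate induction: $\mathcal{G}(x)$ and $\mathcal{G}(c)$ are either fixed or learnable embeddings (differentiable by assumption 1), and $\mathcal{G}(f(\tau_1,\dots,\tau_n))$ is $\mathcal{G}(f;\theta)$ applied elementwise to differentiable inputs. For atoms $p(\tau_1,\dots,\tau_n)$ we compose $\mathcal{G}_t(p;\theta)$ with the term groundings and use the chain rule. The constants $\top,\bot$ are constant in $\theta$.

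The equality atom needs care. The map $\exp(-\alpha\|u-v\|_2)$ is not differentiable where $u=v$, because the norm has a kink at $0$. We would state this explicitly and either restrict to points where $\mathcal{G}(\tau_1)\neq\mathcal{G}(\tau_2)$, or note that the implementation can use the squared norm or an $\epsilon$-smoothed norm. The same caveat, together with the boundary issue below, is the main obstacle: the theorem holds on the open region where these singularities are avoided, and we would phrase the proof that way.

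\emph{Inductive steps.}
\begin{itemize}
\item \emph{Connectives.} $N,T,S,I$ of the stable product configuration are polynomials, so the clauses for $\neg,\land,\lor,\rightarrow$ are differentiable by assumption 2 and the chain rule.
\item \emph{Quantifiers.} These apply $A^{\mathrm{M}}_p$ or $A^{\mathrm{ME}}_p$ to finitely many differentiable entries, which is differentiable by assumption 3.
\item \emph{$\X$ and $\WX$.} These either copy $\mathcal{G}_{t+1}(\phi)$, which is differentiable by the strengthened hypothesis, or return the constant $0$ or $1$.
\item \emph{$\F$ and $\G$.} These apply $A^{\mathrm{M}}_p$ or $A^{\mathrm{ME}}_p$ over the finite window $[t,l]$.
\item \emph{$\U$.} Each inner term $\bigotimes_{v\in[t,t'-1]}\mathcal{G}_v(\phi)$ is either the constant $1$ (empty window) or an $A^{\mathrm{ME}}_p$ aggregation. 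Combining it with $\mathcal{G}_{t'}(\psi)$ through $T$ and then aggregating with $A^{\mathrm{M}}_p$ gives a finite composition of differentiable maps.
\item \emph{$\R$.} This is $N$ applied to the $\U$-grounding of $\neg\phi,\neg\psi$, which is covered by the previous cases.
\item \emph{Bounded variants.} These only shrink the windows, so the same argument applies.
\end{itemize}

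The remaining care point concerns assumption 3 at the boundary. $A^{\mathrm{M}}_p$ contains $(\cdot)^{1/p}$, which is not differentiable at $0$ when all inputs are $0$; $A^{\mathrm{ME}}_p$ has the same problem when all inputs are $1$. We would invoke the cited differentiability result of \citet{DBLP:journals/ai/KriekenAH22} and additionally point out that the stable product configuration maps inputs into $[\epsilon,1-\epsilon]$ precisely to keep the aggregators differentiable on the domain actually reached. With this, every clause preserves differentiability and the induction closes.
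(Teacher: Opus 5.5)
Your proof is correct and is essentially the paper's argument. It is a structural induction with the same cases:
\begin{itemize}
\item atoms by assumption~1 and connectives by assumption~2;
\item $\exists,\forall,\F,\G$ as finite $p$-mean aggregations;
\item $\X,\WX$ as a shift or a constant;
\item $\U$ as $A^{\mathrm{M}}_p$ over the candidates $T\bigl(\mathcal{G}_k(\psi),A^{\mathrm{ME}}_p(\mathcal{G}_t(\phi),\dots,\mathcal{G}_{k-1}(\phi))\bigr)$;
\item $\R$ by duality.
\end{itemize}
Stating the hypothesis for all $t\in[0,l]$ at once is the right formulation. The paper's proof assumes it at a single $t$ and then silently uses it at $t+1$ and over whole windows.

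Your two caveats deserve different verdicts.

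\emph{Equality.} This caveat is real, and the paper's proof never treats $\tau_1=\tau_2$. Take a learnable $f$ with $\mathcal{G}(f;\theta)(u)=\theta u$, and set $\mathcal{G}(c)=1$, $\mathcal{G}(d)=0$. Then $\mathcal{G}_t(f(c)=d;\theta)=\exp(-\alpha|\theta|)$. This satisfies assumptions 1--3 but is not differentiable at $\theta=0$. So the statement as written genuinely needs your restriction or a smoothed distance.

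\emph{Aggregator boundary.} This caveat is unnecessary. You should neither shrink the domain for it nor rely on $\epsilon$-projections, which the paper's Appendix~\ref{apd:operators} does not include anyway. The argument goes as follows.
\begin{itemize}
\item Let $u_1,\dots,u_m$ be differentiable $[0,1]$-valued functions on an open parameter set, all vanishing at $\theta_0$.
\item Each $u_i$ attains its minimum at $\theta_0$, so $\nabla u_i(\theta_0)=0$ and $u_i(\theta)=o(\|\theta-\theta_0\|)$.
\item Hence $0\le A^{\mathrm{M}}_p(u(\theta))\le\max_i u_i(\theta)=o(\|\theta-\theta_0\|)$, so the composite is differentiable at $\theta_0$ with zero gradient.
\item The mirror argument, applied to $1-u_i$, handles $A^{\mathrm{ME}}_p$ when all inputs equal $1$.
\item Elsewhere the chain rule applies directly, since $p\ge 1$.
\end{itemize}
Every aggregator input in LTRL\textsubscript{f} is a truth value in $[0,1]$. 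So the aggregation clauses preserve differentiability everywhere, and only the equality atom forces a restriction.
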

\begin{proof}
The proof proceeds by structural induction on the formula $\phi$.

\emph{Base case:}  
For an atomic predicate $P$ (and function $f$), $\mathcal{G}_t(P;\theta)$ (and $\mathcal{G}_t(f;\theta)$) is differentiable in $\theta$ by assumption 1.

\emph{Inductive step:}  
Assume $\mathcal{G}_t(\phi;\theta)$ and $\mathcal{G}_t(\psi;\theta)$ are differentiable. Differentiability is preserved for formulas built with:
\begin{compactitem}[$\bullet$]
    \item \emph{Logical connectives} ($\wedge$, $\vee$, $\neg$, $\rightarrow$) as the are implemented by the t-norm, t-conorm, the fuzzy negation and the fuzzy implication that are differentiable functions by assumption 2. Just as an example, when the fuzzy conjunction is implemented with the product t-norm: $\mathcal{G}_t(\phi \land \psi;\theta)    = T\bigl(\mathcal{G}_t(\phi;\theta),\mathcal{G}_t(\psi;\theta)\bigr) = \mathcal{G}_t(\phi;\theta) \cdot \mathcal{G}_t(\psi;\theta),
    $ which is differentiable by the chain rule.
    \item \emph{Quantifiers and eventually/always operators} ($\exists$, $\forall$, $\F$, $\G$) as they are implemented through differentiable aggregation operator (assumption 3) to a finite sequence of differentiable terms (objects are in a finite domain and traces are finite), hence remain differentiable.
    \item \emph{Next operators} ($\X$, $\WX$) as they are correspond to a time shift of one position. For $t < l$, differentiability follows from the inductive hypothesis applied to $t+1$, i.e., $\mathcal{G}_t(\X\phi;\theta) = \mathcal{G}_{t+1}[\phi;\theta]$; boundary values ($0$ or $1$) are constant and thus differentiable.
    \item \emph{Until} ($\U$):  
    to compute the derivative of the Until operator, we first define, for each $k \in \{t,..,l\}$, a candidate value
    \[
    c_k(\theta) = T\left(\mathcal{G}_k(\psi;\theta),A_p^{ME}(\mathcal{G}_t(\phi;\theta),\ldots,\mathcal{G}_{k-1}(\phi;\theta))\right).
    \]
    By the inductive hypothesis, all groundings $\{\mathcal{G}_i(\phi;\theta)\}_{i=t}^{k-1}$ and $\mathcal{G}_k(\psi;\theta)$ are differentiable. Since $A_p^{ME}$ the t-norm $T$ are differentiable by assumption, each candidate $c_k(\theta)$ is differentiable. The grounding of $\phi U\psi$ at each time step $t$ is then obtained by using the $p$-mean aggregation operator $A_p^M$ over all the candidates:
    \[
    \mathcal{G}_t(\phi\,\U\,\psi;\theta) = A_p^M(c_t(\theta),\dots,c_l(\theta))
    \]
    which is differentiable because $A^M_p$ is differentiable.
    \item \emph{Release} ($\R$): the proof follows from $\U$ via duality, since $\phi \,\R\, \psi$ can be expressed using $\U$ and negation, both of which are differentiable; thus $\R$ preserves differentiability.
\end{compactitem}
Given the above, all constructions involve finite compositions, sums, and products of differentiable functions over bounded time domains. As a result, $\mathcal{G}_t(\phi;\theta)$ is differentiable in $\theta$ by the chain rule.
\end{proof}

\begin{solution}
\textcolor{blue}{
\section{Notes about the Implementation}\label{apd:implementation}
The implementation is based on the LTN library~\url{https://github.com/logictensornetworks/logictensornetworks}. In the supplementary material we provide a reference implementation consisting of:
\begin{compactenum}[1.]
    \item \textbf{LTL\textsubscript{f} Parser}: A Lark-based parser that accepts first-order LTL\textsubscript{f} formulas with predicates, variables, constants, quantifiers, and temporal operators, producing a structured AST. The design of the parser was heavily inspired by the work of~\citet{DBLP:journals/procsci/DonadelloFIMM25} and their open-source repository;
    \item \textbf{Temporal Operators}: TensorFlow-based implementations of all LTRL\textsubscript{f} temporal operators ($\X$, $\WX$, $\F$, $\G$, $\U$, $\R$, and bounded variants), using LTN's built-in aggregators ($\ApM$, $\ApME$) and fuzzy connectives;
    \item \textbf{Evaluator}: An evaluator that is able to walk the AST extracted by the parser and produces the tensors, following the semantics of the operators specified in the formula;
    \item \textbf{Datasets}: the synthetic datasets and the code for reproducing the results.
\end{compactenum}
}
\end{solution}

\section{Results Plot}\label{apd:res_plot}
\begin{figure}[H]
    \centering   \includegraphics[width=1\linewidth]{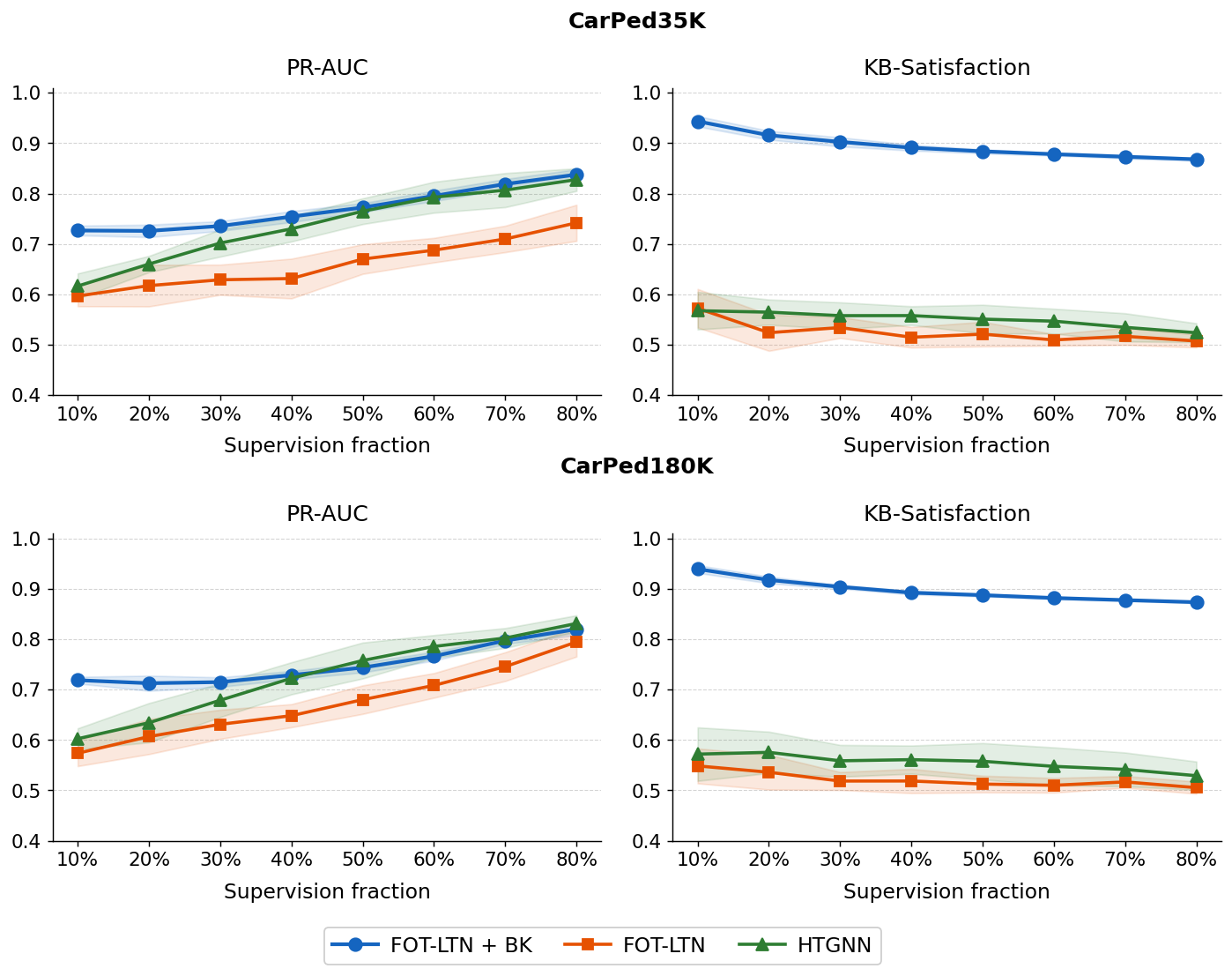}
    \caption{Average of PR-AUC and KB-Satisfaction trends along with the percentage levels of data availability for training.}
    \label{fig:results_plot2}
\end{figure}

\section{Statistical Tests}\label{apd:statistical_test}
We want to test whether our results are statistically significant, namely that it is indeed true that FOT-LTN + $\mathcal{BK}$ performs better than HTGNN. In order to do that, we use the Welch two-sample t-tests, as we want to compare two independent distributions with possibly different variance. In particular, we compare the means over the 10 reported runs on the CarPed35K and CarPed180K datasets. Since we are comparing results at multiple supervision data availability, we need to perform a correction to control for false positives. Here, we use the Benjamini-Hochberg FDR correction, and obtain the $p_{\mathrm{adj}}$-values reported in Table~\ref{tab:statistical_analysis_carped_joined}.
\begin{table}[H]
\centering
\caption{Welch t-test results comparing FOT-LTN + $\mathcal{BK}$ ($A$) against HTGNN ($B$) on the \textbf{CarPed35K} and \textbf{CarPed180K} datasets.}
\label{tab:statistical_analysis_carped_joined}
\resizebox{\textwidth}{!}{%
\begin{tabular}{lcccccc|cccccc}
\hline
\multirow{2}{*}{Fraction}
& \multicolumn{6}{c|}{\textbf{CarPed35K}}
& \multicolumn{6}{c}{\textbf{CarPed180K}} \\
\cline{2-13}
& $\bar{\mu}_A$ & $\bar{\mu}_B$ & $S_A$ & $S_B$ & $t$ & $p_{\mathrm{adj}}$
& $\bar{\mu}_A$ & $\bar{\mu}_B$ & $S_A$ & $S_B$ & $t$ & $p_{\mathrm{adj}}$ \\
\hline
10\% & 0.727 & 0.616 & 0.010 & 0.025 & 12.38 & $<0.001$
     & 0.719 & 0.603 & 0.007 & 0.021 & 16.07 & $<0.001$ \\
20\% & 0.726 & 0.660 & 0.012 & 0.016 & 9.71 & $<0.001$
     & 0.713 & 0.634 & 0.015 & 0.039 & 5.63 & $<0.001$ \\
30\% & 0.736 & 0.702 & 0.010 & 0.027 & 3.55 & 0.014
     & 0.715 & 0.679 & 0.009 & 0.033 & 3.11 & 0.028 \\
40\% & 0.754 & 0.730 & 0.011 & 0.025 & 2.65 & 0.047
     & 0.729 & 0.723 & 0.008 & 0.032 & 0.58 & 0.614 \\
50\% & 0.772 & 0.765 & 0.009 & 0.026 & 0.82 & 0.529
     & 0.744 & 0.758 & 0.010 & 0.036 & -1.14 & 0.407 \\
60\% & 0.795 & 0.793 & 0.010 & 0.031 & 0.24 & 0.811
     & 0.766 & 0.786 & 0.009 & 0.023 & -2.39 & 0.069 \\
70\% & 0.819 & 0.807 & 0.010 & 0.034 & 1.03 & 0.435
     & 0.797 & 0.802 & 0.007 & 0.020 & -0.73 & 0.548 \\
80\% & 0.838 & 0.828 & 0.011 & 0.022 & 1.21 & 0.398
     & 0.820 & 0.831 & 0.012 & 0.016 & -1.72 & 0.185 \\
\hline
\end{tabular}%
}
\end{table}
Assuming a significance level of $\alpha=0.05$, FOT-LTN + $\mathcal{BK}$ significantly outperforms HTGNN at $10\%, 20\%, 30\%, 40\%$ supervision on CarPed35K, and at $10\%, 20\%, 30\%$ supervision on CarPed180K. At higher supervision levels, the difference is not statistically significant as $p_{\mathrm{adj}} \not< 0.05$. Therefore, the test confirms that temporal axioms indeed provide higher PR-AUC in low-supervision regimes, while at medium-high supervision levels the observed differences are not statistically significant.

\end{document}